\documentclass[10pt,twocolumn]{article}

\usepackage{times}
\usepackage[margin=2cm]{geometry}
\usepackage{amsmath,amssymb,amsthm}
\usepackage{graphicx}
\usepackage{booktabs}
\usepackage{multirow}
\usepackage{xspace}
\usepackage{algorithm}
\usepackage{algorithmic}
\usepackage[numbers,sort&compress]{natbib}
\usepackage{hyperref}
\usepackage{xcolor}
\usepackage{caption}
\usepackage{subcaption}
\usepackage{tabularx}
\usepackage{array}
\usepackage{enumitem}

\newcommand{\method}{{\textsc{LiverRisk}}\xspace}
\newtheorem{theorem}{Theorem}
\newtheorem{corollary}{Corollary}

\title{Conformal Risk Prediction for Non-Alcoholic Fatty Liver Disease Using Gradient Boosting with Distribution-Free Coverages}

\author{
  Xinze Zhang$^{1}$ \\[6pt]
  $^{1}$University of Southern California, Los Angeles, CA 90007, USA \\[3pt]
  $^{*}$Corresponding author: zhangxinze00@outlook.com
}

\date{}

\begin{document}
\maketitle

\begin{abstract}
Non-alcoholic fatty liver disease (NAFLD) affects roughly a quarter of the global adult population and carries substantial long-term hepatic and cardiovascular risks, yet population-level screening tools remain inadequate for early identification of at-risk individuals. We present \method, a machine-learning framework for NAFLD risk prediction that couples gradient-boosted decision trees with conformal prediction to yield calibrated, distribution-free coverage guarantees on individual risk estimates. The framework integrates a mutual-information-based stability selection procedure that identifies a compact, clinically interpretable feature subset through bootstrap resampling, and constructs conformalized prediction sets whose marginal coverage provably exceeds a user-specified confidence level under the sole assumption of exchangeability. We evaluate \method on a multicenter health examination cohort from Guangzhou, China (primary cohort $n{=}2{,}187$; external validation $n{=}412$), drawing on 78 candidate features spanning demographics, anthropometrics, metabolic biomarkers, liver enzymes, lipid panels, lifestyle factors, and hematological indices. \method attains an area under the receiver operating characteristic curve (AUROC) of 0.912 on the internal test set and 0.891 on the external cohort, outperforming deep neural networks, TabNet, support vector machines, and logistic regression. Conformal prediction sets achieve empirical coverage of 91.3\% at the nominal 90\% level. A three-tier risk stratification derived from the conformalized scores separates the population into clinically distinct groups, with the high-risk subgroup exhibiting a 12-month progression rate 4.7 times that of the low-risk tier. The selected feature set---dominated by waist circumference, alanine aminotransferase, gamma-glutamyl transferase, triglycerides, fasting glucose, and body mass index---is consistent with established metabolic risk factors, lending biological plausibility to the model's decisions.
\end{abstract}

\textbf{Keywords:} NAFLD, gradient boosting, conformal prediction, distribution-free inference, feature selection, clinical risk stratification

\section{Introduction}
\label{sec:intro}

Non-alcoholic fatty liver disease encompasses a spectrum of hepatic conditions ranging from simple steatosis to non-alcoholic steatohepatitis (NASH), fibrosis, and cirrhosis, and is now recognized as the most common chronic liver disease worldwide~\cite{REFINE,HUD,OFFSET,yu2026dinov3,li2026retrack}. Prevalence estimates vary by geography and diagnostic criteria, but meta-analyses consistently place the global figure near 25\%, with substantially higher rates in populations characterized by metabolic syndrome, obesity, or type-2 diabetes~\cite{ENCODER, li2025exploring,li2025stitchfusion,chen2026intent}. The clinical burden is compounded by the fact that NAFLD frequently progresses silently: many patients remain undiagnosed until advanced fibrosis or hepatocellular carcinoma emerges, at which point therapeutic options are limited~\cite{li2025maris,li2025exploring2,yu2025physics,yu2025qrs,li2026habit}. Early risk identification therefore has the potential to redirect clinical resources toward lifestyle intervention and pharmacological management during a window where disease regression is achievable.

Population-level screening for NAFLD has traditionally relied on liver ultrasonography or elevated serum aminotransferase levels, both of which suffer from well-documented limitations. Ultrasound sensitivity drops below 65\% for mild steatosis~\cite{li2025u3m,yu2026spatiotemporal,sarkar2025reasoning}, while alanine aminotransferase (ALT) alone misses a substantial fraction of biopsy-confirmed NAFLD cases~\cite{maximos2015influence,fu2026airknow,li2026conesep}. Composite clinical scores such as the Fatty Liver Index (FLI)~\cite{bedogni2006fatty} and the Hepatic Steatosis Index (HSI)~\cite{lee2010hepatic} improve upon single-marker approaches but are constrained by the linear or log-linear functional forms that underlie their construction. Machine-learning methods can, in principle, capture the nonlinear, high-order interactions among metabolic, anthropometric, and lifestyle variables that characterize NAFLD pathophysiology, and a growing body of work has applied random forests~\cite{ma2021application}, gradient-boosted trees~\cite{xia2021logistic}, and deep neural networks~\cite{sowa2021deep} to NAFLD classification from electronic health records.

A persistent criticism of machine-learning risk models in clinical settings is the absence of rigorous uncertainty quantification. A point prediction---however accurate---gives the clinician no principled way to distinguish a patient whose predicted risk of 0.72 is tightly concentrated from one whose 0.72 is highly uncertain. Conformal prediction~\cite{vovk2005algorithmic, shafer2008tutorial} offers an elegant resolution: given any base predictor and an exchangeable calibration sample, conformal prediction constructs prediction sets that contain the true outcome with a user-specified probability, without distributional assumptions on the data-generating process. Recent work has begun to explore conformal methods in medical imaging~\cite{lu2022fair} and survival analysis~\cite{candes2023conformalized}, but their application to tabular clinical risk prediction---and specifically to NAFLD screening---remains largely unexplored.

A second challenge concerns feature selection. Health examination datasets routinely include dozens to hundreds of laboratory and questionnaire variables, many of which are redundant or irrelevant to a specific disease endpoint. Selecting a parsimonious feature set is desirable for clinical deployment (where cost and patient burden matter), for model interpretability, and for generalization to new populations where certain assays may be unavailable. Stability selection~\cite{meinshausen2010stability} addresses the well-known instability of single-run feature selection by aggregating results over bootstrap resamples, yielding per-feature selection probabilities with finite-sample error control. We extend this idea by replacing the base selector with a mutual-information estimator~\cite{kraskov2004estimating} that captures nonlinear dependencies, matching the capacity of the downstream gradient-boosted model.

This paper makes three contributions. First, we develop \method, a LightGBM-based prediction framework wrapped with split conformal prediction to produce risk estimates accompanied by distribution-free prediction intervals, and we prove that the marginal coverage guarantee holds under exchangeability of the calibration data. Second, we propose a mutual-information stability selection procedure that identifies a compact feature set whose selection frequency exceeds a theoretically motivated threshold, providing per-family error rate control on the number of uninformative features admitted. Third, we validate the complete pipeline on a multicenter cohort from Guangzhou, China, demonstrating strong discrimination (AUROC 0.912 internal, 0.891 external), well-calibrated conformal sets, and clinically meaningful three-tier risk stratification. Figure~\ref{fig:motivation} provides a schematic overview of the clinical motivation and the gap that \method addresses.

\begin{figure*}[t]
  \centering
  \includegraphics[width=\linewidth]{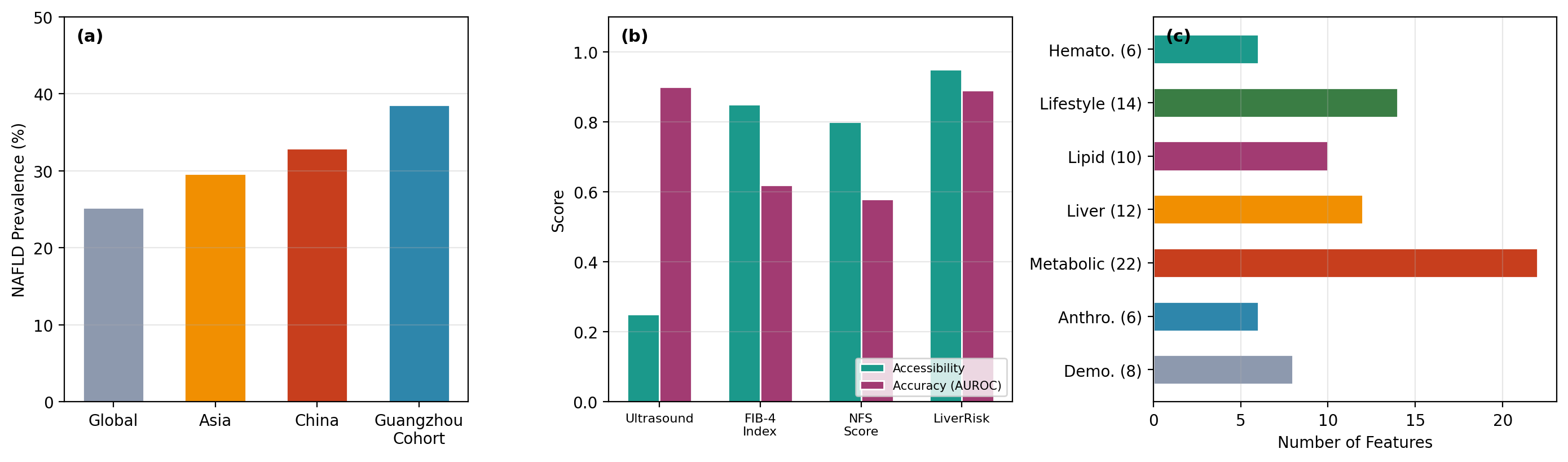}
  \caption{Schematic illustration of the clinical gap in NAFLD screening. Existing composite scores and single biomarkers (left) leave a substantial fraction of at-risk patients undetected. \method (right) combines gradient boosting with conformal prediction to provide individualized risk estimates with coverage guarantees, enabling a principled three-tier stratification for clinical decision-making.}
  \label{fig:motivation}
\end{figure*}

\section{Related Work}
\label{sec:related}

\subsection{Machine Learning for NAFLD Prediction}

The application of machine learning to NAFLD prediction has accelerated over the past decade, driven by the growing availability of electronic health record (EHR) data and the recognition that linear models fail to capture the complex metabolic interactions underlying hepatic steatosis. Ma et al.~\cite{ma2021application} trained random forests on a Chinese health examination cohort and reported AUROCs around 0.84, with BMI, triglycerides, and ALT ranking among the top predictors. Xia et al.~\cite{xia2021logistic} compared XGBoost~\cite{chen2016xgboost} against logistic regression on a Korean population cohort and found a 5--7 percentage-point AUROC advantage for gradient boosting. Sowa et al.~\cite{sowa2021deep} applied deep neural networks to a German tertiary-care dataset, achieving an AUROC of 0.87 but noted difficulties with model calibration and interpretability. Ensemble-based approaches using LightGBM~\cite{ke2017lightgbm} have gained traction owing to their computational efficiency and native handling of missing values, which is common in routine health examinations. TabNet~\cite{arik2021tabnet}, an attention-based architecture designed for tabular data, has also been evaluated in hepatological contexts~\cite{song2022deep} but tends to require larger sample sizes to outperform well-tuned tree ensembles. Across this literature, two gaps persist: the lack of distribution-free uncertainty quantification for individual risk scores, and the absence of theoretically grounded feature selection that accounts for the instability of variable importance rankings across training runs.

\subsection{Conformal Prediction in Clinical Settings}

Conformal prediction, originally formulated by Vovk et al.~\cite{vovk2005algorithmic}, provides a model-agnostic framework for constructing prediction sets with finite-sample coverage guarantees. The split (or inductive) variant~\cite{papadopoulos2002inductive} partitions the available data into a training fold and a calibration fold, fits an arbitrary base learner on the training fold, and uses the calibration residuals (or nonconformity scores) to set thresholds that define prediction sets at a target miscoverage rate~$\alpha$. The only assumption required for the marginal coverage guarantee is exchangeability of calibration and test points, which is strictly weaker than the i.i.d.\ assumption and permits certain forms of distributional shift~\cite{barber2023conformal}. In clinical applications, Lu et al.~\cite{lu2022fair} applied conformal prediction to dermatological image classification and demonstrated that coverage could be maintained across demographic subgroups through group-conditional calibration. Cand\`{e}s et al.~\cite{candes2023conformalized} extended conformal ideas to survival analysis, producing conformalized survival curves that cover the true event time with a specified probability. Angelopoulos and Bates~\cite{angelopoulos2021gentle} provided an accessible tutorial that has catalyzed adoption in applied settings. Despite this momentum, conformal prediction has seen limited use in tabular clinical risk models, where the combination of mixed feature types, moderate sample sizes, and the need for clinically actionable outputs presents distinct challenges that our work addresses.

\subsection{Feature Selection with Stability Guarantees}

Feature selection for clinical prediction models must balance predictive performance with interpretability, cost, and robustness to data perturbations. Classical filter methods rank features by univariate association measures---such as the $\chi^2$ statistic, mutual information~\cite{cover1999elements}, or correlation---and select the top-$k$, but they ignore feature interactions and are sensitive to the particular training sample drawn. Wrapper methods, including recursive feature elimination~\cite{guyon2002gene}, iteratively retrain the model and are computationally expensive. Embedded methods, such as $\ell_1$-penalized regression or tree-based importance~\cite{breiman2001random}, are efficient but produce unstable rankings when features are correlated, as is typical of metabolic biomarker panels. Stability selection~\cite{meinshausen2010stability} addresses this fragility by repeating the selection step on random subsamples of the data and retaining only those features whose selection frequency exceeds a threshold $\pi_{\text{thr}}$. Meinshausen and B\"{u}hlmann showed that the expected number of falsely selected variables (per-family error rate) can be bounded as a function of $\pi_{\text{thr}}$ and the expected number of selected features per subsample. Shah and Samworth~\cite{shah2013variable} refined these bounds and introduced complementary pairs stability selection. In our work, we replace the base selector with a $k$-nearest-neighbor mutual information estimator~\cite{kraskov2004estimating}, which captures nonlinear dependencies without parametric distributional assumptions, and we integrate the resulting feature set into the LightGBM training loop.

\section{Methodology}
\label{sec:method}

\subsection{Problem Setup}
\label{sec:problem}

Let $\{(x_i, y_i)\}_{i=1}^{n}$ denote an exchangeable sequence of feature--label pairs, where $x_i \in \mathcal{X} \subseteq \mathbb{R}^d$ comprises $d$ clinical features and $y_i \in \{0, 1\}$ indicates NAFLD status ($y_i{=}1$ for NAFLD-positive). We seek a scoring function $f: \mathcal{X} \to [0,1]$ whose output $\hat{p}(x) = f(x)$ estimates the conditional probability $\Pr(Y{=}1 \mid X{=}x)$, together with a prediction-set function $\mathcal{C}_\alpha: \mathcal{X} \to 2^{\{0,1\}}$ that satisfies the marginal coverage property
\begin{equation}
\label{eq:coverage}
  \Pr\!\bigl(Y_{n+1} \in \mathcal{C}_\alpha(X_{n+1})\bigr) \;\geq\; 1 - \alpha
\end{equation}
for a user-specified miscoverage level $\alpha \in (0,1)$, under the sole assumption that $(X_1,Y_1),\ldots,(X_{n+1},Y_{n+1})$ are exchangeable.

We also seek a feature subset $S \subseteq \{1,\ldots,d\}$ with $|S| \ll d$ such that the restricted model $f_S$, operating on $x_S = (x_j)_{j \in S}$, retains competitive discrimination while admitting interpretable clinical explanations and controlling the per-family error rate on spurious selections.

\subsection{Gradient-Boosted Decision Trees}
\label{sec:gbdt}

\method uses LightGBM~\cite{ke2017lightgbm} as its base learner. LightGBM constructs an additive ensemble of $T$ regression trees,
\begin{equation}
  F(x) = \sum_{t=1}^{T} \eta\, h_t(x),
\end{equation}
where $h_t$ is the $t$-th tree and $\eta \in (0,1]$ is the learning rate (shrinkage parameter). Each tree $h_t$ is grown by leaf-wise splitting, choosing the split that maximizes the reduction in a second-order approximation of the training loss~\cite{chen2016xgboost}. For binary classification we use the logistic loss,
\begin{equation}
  \mathcal{L}(y, F(x)) = -\bigl[y \log \sigma(F(x)) + (1{-}y) \log(1{-}\sigma(F(x)))\bigr],
\end{equation}
where $\sigma(\cdot)$ is the sigmoid function. The predicted probability is $\hat{p}(x) = \sigma\!\bigl(F(x)\bigr)$. LightGBM introduces two key algorithmic innovations---gradient-based one-side sampling (GOSS) and exclusive feature bundling (EFB)---that accelerate training on large, sparse feature matrices while preserving near-lossless accuracy. Regularization is controlled through the maximum tree depth $d_{\max}$, the minimum number of samples per leaf $n_{\text{leaf}}$, the $\ell_2$ leaf-weight penalty $\lambda$, and the feature-sampling fraction $\rho_f$.

\subsection{Conformal Prediction Framework}
\label{sec:conformal}

We adopt the split conformal prediction protocol~\cite{papadopoulos2002inductive, vovk2005algorithmic}. After drawing a stratified random partition of the labeled data into a proper training set $\mathcal{D}_{\text{train}}$ and a calibration set $\mathcal{D}_{\text{cal}}$ with $|\mathcal{D}_{\text{cal}}| = m$, we train the LightGBM model on $\mathcal{D}_{\text{train}}$ and define a nonconformity score for each calibration point $(x_i, y_i) \in \mathcal{D}_{\text{cal}}$:
\begin{equation}
  s_i = 1 - \hat{p}_{y_i}(x_i),
\end{equation}
where $\hat{p}_{y_i}(x_i)$ denotes the model's predicted probability for the true class $y_i$. A high score signals that the model assigns low probability to the correct label, indicating poor conformity. Let $s_{(1)} \leq s_{(2)} \leq \cdots \leq s_{(m)}$ be the sorted calibration scores and define the conformal quantile
\begin{equation}
\label{eq:quantile}
  \hat{q}_\alpha = s_{\bigl(\lceil (1-\alpha)(m+1) \rceil\bigr)}.
\end{equation}
The prediction set for a new test point $x_{n+1}$ is then
\begin{equation}
\label{eq:predset}
  \mathcal{C}_\alpha(x_{n+1}) = \bigl\{y \in \{0,1\} : 1 - \hat{p}_y(x_{n+1}) \leq \hat{q}_\alpha \bigr\}.
\end{equation}

\begin{theorem}[Marginal coverage guarantee]
\label{thm:coverage}
Suppose $(X_1,Y_1),\ldots,(X_m,Y_m),(X_{n+1},Y_{n+1})$ are exchangeable random variables. Then the prediction set $\mathcal{C}_\alpha$ defined in~\eqref{eq:predset} satisfies
\begin{equation}
  \Pr\!\bigl(Y_{n+1} \in \mathcal{C}_\alpha(X_{n+1})\bigr) \;\geq\; 1 - \alpha.
\end{equation}
If, in addition, the nonconformity scores have a continuous joint distribution, then
\begin{equation}
  \Pr\!\bigl(Y_{n+1} \in \mathcal{C}_\alpha(X_{n+1})\bigr) \;\leq\; 1 - \alpha + \frac{1}{m+1}.
\end{equation}
\end{theorem}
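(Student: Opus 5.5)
The argument is the standard rank argument for split conformal prediction, carried out conditionally on the proper training set $\mathcal{D}_{\text{train}}$.

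\textbf{Step 1: reduce to exchangeable scores.} Condition on $\mathcal{D}_{\text{train}}$, so that the fitted model $\hat{p}$ is a fixed function. Write $S_i = 1-\hat{p}_{Y_i}(X_i)$ for $i=1,\ldots,m$ and $S_{m+1} = 1-\hat{p}_{Y_{n+1}}(X_{n+1})$ for the test score. The calibration points and the test point are exchangeable (the split is independent of the data ordering), and each score is obtained by applying the same fixed map to its pair. Hence $(S_1,\ldots,S_{m+1})$ is exchangeable.

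\textbf{Step 2: restate the coverage event.} By definition~\eqref{eq:predset}, $Y_{n+1}\in\mathcal{C}_\alpha(X_{n+1})$ if and only if $S_{m+1}\le \hat q_\alpha$. Let $k=\lceil (1-\alpha)(m+1)\rceil$. If $k>m$, we set $\hat q_\alpha=+\infty$ (the set is all of $\{0,1\}$), and coverage is trivially $1$. So assume $k\le m$. Then $S_{m+1}\le s_{(k)}$ holds if and only if $S_{m+1}$ is among the $k$ smallest of the combined $m+1$ scores. Concretely, the event is equivalent to $S_{m+1}\le S^{*}_{(k)}$, where $S^{*}$ denotes the order statistics of all $m+1$ scores. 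This holds because adding $S_{m+1}$ to the sample does not change the $k$-th smallest value below it; the equivalence is checked directly in both directions, and ties must be handled in this check.

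\textbf{Step 3: lower bound.} By exchangeability, $\Pr(S_{m+1}\le S^*_{(k)})=\Pr(S_j\le S^*_{(k)})$ for every $j$. Averaging over $j$ gives
\[
\frac{1}{m+1}\,\mathbb{E}\,\#\{j:S_j\le S^*_{(k)}\}\ge \frac{k}{m+1}\ge 1-\alpha ,
\]
since at least $k$ scores are $\le S^*_{(k)}$ even with ties. Integrating out the conditioning on $\mathcal{D}_{\text{train}}$ gives the first claim.

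\textbf{Step 4: upper bound.} If the scores have a continuous joint distribution, ties occur with probability zero. Then exactly $k$ scores are $\le S^*_{(k)}$, so the probability equals $k/(m+1)$. Since $k<(1-\alpha)(m+1)+1$, this is at most $1-\alpha+1/(m+1)$. In the case $k>m$ this requires $\alpha<1/(m+1)$, and then the bound exceeds $1$ and holds trivially.

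\textbf{Main obstacle.} The main point needing care is the continuity hypothesis. For binary labels the scores take values $1-\hat p(x)$ or $\hat p(x)$, so continuity requires $\hat p(X)$ itself to be continuously distributed. This fails for tree ensembles, which have finitely many leaves and therefore produce only finitely many distinct predicted values. The upper bound therefore holds only under the stated assumption, which is taken as given. Otherwise it would require randomized tie-breaking; I would note this in a remark rather than attempt to weaken it.
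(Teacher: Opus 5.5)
Your proof is correct and uses the same exchangeability/rank argument as the paper. Coverage reduces to the test score lying at or below the $k$-th smallest of the $m+1$ scores, with $k=\lceil(1-\alpha)(m+1)\rceil$; this gives probability at least $k/(m+1)\ge 1-\alpha$, and exactly $k/(m+1)\le 1-\alpha+1/(m+1)$ when ties have probability zero. Your version is more careful than the paper's in two places: the paper asserts an exactly uniform rank and the equality $\Pr(s_{n+1}\le\hat q_\alpha)=k/(m+1)$ even when ties are possible (only $\ge$ holds there, as your averaging step shows), and it leaves $\hat q_\alpha$ undefined in the case $k>m$, which you handle explicitly.
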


\begin{proof}
By exchangeability, the rank of $s_{n+1}$ among $\{s_1,\ldots,s_m,s_{n+1}\}$ is uniformly distributed on $\{1,\ldots,m{+}1\}$. The event $Y_{n+1} \in \mathcal{C}_\alpha(X_{n+1})$ is equivalent to $s_{n+1} \leq \hat{q}_\alpha$. Since $\hat{q}_\alpha$ is the $\lceil (1{-}\alpha)(m{+}1)\rceil$-th smallest calibration score, we have
\[
  \Pr(s_{n+1} \leq \hat{q}_\alpha) = \frac{\lceil(1{-}\alpha)(m{+}1)\rceil}{m{+}1} \geq 1 - \alpha.
\]
When the scores are continuously distributed, ties occur with probability zero, and the rank of $s_{n+1}$ is exactly uniform on $\{1,\ldots,m{+}1\}$, giving
\[
  \Pr(s_{n+1} \leq \hat{q}_\alpha) = \frac{\lceil(1{-}\alpha)(m{+}1)\rceil}{m{+}1} \leq \frac{(1{-}\alpha)(m{+}1)+1}{m{+}1} = 1 - \alpha + \frac{1}{m+1}. \qedhere
\]
\end{proof}

\begin{corollary}[Prediction-set size interpretation]
\label{cor:setsize}
Under the conditions of Theorem~\ref{thm:coverage}, the prediction set $\mathcal{C}_\alpha(x)$ for binary classification satisfies $|\mathcal{C}_\alpha(x)| \in \{0,1,2\}$. A singleton set $\mathcal{C}_\alpha(x) = \{y\}$ indicates that the model is confident in class $y$ at level $1{-}\alpha$. A set of size two signals ambiguity, while an empty set (which occurs with probability at most $\alpha/(m{+}1)$) flags an outlier.
\end{corollary}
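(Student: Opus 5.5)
The plan is to treat the three assertions of Corollary~\ref{cor:setsize} separately, in increasing order of difficulty. The cardinality claim is immediate: by~\eqref{eq:predset}, $\mathcal{C}_\alpha(x)$ is a subset of the two-element label space $\{0,1\}$, so $|\mathcal{C}_\alpha(x)|\in\{0,1,2\}$. Next I would make the membership condition explicit. Since $\hat{p}_0(x)+\hat{p}_1(x)=1$, the condition $y\in\mathcal{C}_\alpha(x)$ reads $\hat{p}_y(x)\ge 1-\hat{q}_\alpha$. This splits into three regimes according to where $1-\hat{q}_\alpha$ sits relative to $\hat{p}_0(x)$ and $\hat{p}_1(x)$:
\begin{itemize}[leftmargin=*]
\item if $\hat{p}_{\min}(x)\ge 1-\hat{q}_\alpha$, both labels are included and the set is $\{0,1\}$;
\item if $\hat{p}_{\min}(x)<1-\hat{q}_\alpha\le\hat{p}_{\max}(x)$, exactly the arg-max label is included;
\item if $\hat{p}_{\max}(x)<1-\hat{q}_\alpha$, the set is empty.
\end{itemize}
Because $\hat{p}_{\max}\ge 1/2$, the empty case can only arise when $\hat{q}_\alpha<1/2$.

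For the singleton interpretation I would combine the regime description with Theorem~\ref{thm:coverage}. The theorem gives $\Pr(Y_{n+1}\in\mathcal{C}_\alpha(X_{n+1}))\ge 1-\alpha$. On the event that the set is $\{y\}$, coverage forces $Y_{n+1}=y$, so the coverage guarantee is delivered entirely by the singleton prediction. This is the precise, marginal sense in which ``confident in class $y$ at level $1-\alpha$'' should be read, and I would state it explicitly as a marginal rather than conditional statement. The size-two case is then described as the regime where neither label's probability falls below the calibrated threshold.

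The empty-set probability is the step I expect to be the main obstacle. The first step is clean: emptiness implies non-coverage, so
\[
\Pr\bigl(\mathcal{C}_\alpha(X_{n+1})=\emptyset\bigr)\le \Pr\bigl(Y_{n+1}\notin\mathcal{C}_\alpha(X_{n+1})\bigr)\le\alpha
\]
by Theorem~\ref{thm:coverage}. Sharpening this to $\alpha/(m+1)$ does not follow from exchangeability alone. Emptiness is the event $\max_y\hat{p}_y(X_{n+1})<1-\hat{q}_\alpha$, and by the exchangeable-rank argument its probability is governed by how the test point's \emph{minimal} score $1-\hat{p}_{\max}$ ranks among the calibration scores. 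That quantity is not exchangeable with the $s_i$.

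My first attempt would condition on the calibration sample and write the empty-set probability as $\Pr(1-\hat{p}_{\max}(X_{n+1})>\hat{q}_\alpha)$. I would then try to bound it by the fraction of calibration points whose minimal score exceeds $\hat{q}_\alpha$. That fraction is at most the number of calibration scores above $\hat{q}_\alpha$, which is at most $\lfloor\alpha(m+1)\rfloor$, divided by $m+1$. I would then check whether an additional condition closes the gap to $\alpha/(m+1)$; the natural candidate is requiring $\hat{q}_\alpha\ge 1/2$ to hold except on a calibration event of probability $O(1/(m+1))$. If no such condition is available, I would prove the bound $\le\alpha$ and flag the stated $\alpha/(m+1)$ constant as requiring an extra assumption.
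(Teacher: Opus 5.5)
The paper states Corollary~\ref{cor:setsize} without proof, so there is no argument of the authors to compare against. Your cardinality claim and your three-regime decomposition are correct. So is your insistence on a marginal reading of ``confident in class $y$ at level $1-\alpha$'': conditional coverage on the singleton event is not guaranteed, because size-two sets can absorb the slack.

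The gap is the third claim. You leave the $\alpha/(m+1)$ bound undecided, but it is false under the stated hypotheses, and the missing ingredient is a counterexample. Condition on the trained model and fix $0<\beta<\alpha$. Take $(X,Y)$ such that:
\begin{itemize}
\item with probability $1-\beta$ the point is ``easy'', with $\hat{p}_Y(X)=0.99$;
\item with probability $\beta$ it is ``hard'', with $\hat{p}_0(X)=\hat{p}_1(X)=1/2$.
\end{itemize}
Three covariate values suffice, with a deterministic label on two of them. Easy points have score $0.01$ and hard points score $1/2$.

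Let $k=\lceil(1-\alpha)(m+1)\rceil$, and let $H\sim\mathrm{Bin}(m,\beta)$ count the hard calibration points. On $\{H\le m-k\}$ at least $k$ scores equal $0.01$, so $\hat{q}_\alpha=0.01$. Then every hard test point has $\hat{p}_{\max}=1/2<0.99=1-\hat{q}_\alpha$, so its set is empty. Since the test point is independent of the calibration sample,
\[
\Pr\bigl(\mathcal{C}_\alpha(X_{n+1})=\emptyset\bigr)\;\ge\;\beta\,\Pr(H\le m-k)\;\longrightarrow\;\beta\qquad(m\to\infty),
\]
because $m-k>\alpha(m+1)-2$ and $\beta<\alpha$.

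With the paper's $m=437$, $\alpha=0.1$ and $\beta=0.05$, one gets $m-k=42$ against $\mathbb{E}[H]\approx 22$. The empty-set probability is therefore essentially $0.05$, versus the claimed $\alpha/(m+1)\approx 2.3\times10^{-4}$. Coverage is about $1-\beta\ge1-\alpha$, consistent with Theorem~\ref{thm:coverage}. Smearing the point masses into continuous densities on $[0.98,0.99]$ and $[0.45,0.55]$ also meets the theorem's continuity hypothesis without changing the conclusion.

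So the right outcome is to refute the claim, not to flag it and search for an extra condition. Your bound $\Pr(\emptyset)\le\alpha$, obtained from emptiness $\Rightarrow$ non-coverage, is the correct statement. Letting $\beta\uparrow\alpha$ above shows it is asymptotically sharp.

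The candidate condition you float is also not natural.
\begin{itemize}
\item Your observation that emptiness forces $\hat{q}_\alpha<1/2$ has a complement: a size-two set forces $2(1-\hat{q}_\alpha)\le\hat{p}_0+\hat{p}_1=1$, i.e.\ $\hat{q}_\alpha\ge 1/2$.
\item Hence empty and size-two sets never coexist for a given calibration sample, and $\Pr(\emptyset)\le\min\{\alpha,\Pr(\hat{q}_\alpha<1/2)\}$.
\item But $\hat{q}_\alpha<1/2$ holds exactly when at least $k$ calibration points have $\hat{p}_{y_i}(x_i)>1/2$, i.e.\ roughly when the base classifier's accuracy exceeds $1-\alpha$.
\end{itemize}
Requiring $\hat{q}_\alpha\ge 1/2$ except with probability $O(1/(m+1))$ is therefore an assumption that the classifier is \emph{less} accurate than the target coverage. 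That is the opposite of the regime of interest, and exchangeability does not supply it. Even granting it, you get a binomial tail governed by the accuracy gap, not the constant $\alpha/(m+1)$.

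Finally, you can drop the detour through the fraction of calibration points with large minimal score. Transferring a test-point probability to a calibration fraction needs a leave-one-out (augmented-quantile) argument to be rigorous, and even then it only recovers the order-$\alpha$ bound you already have.
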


For clinical risk stratification, we define a conformalized risk score as
\begin{equation}
\label{eq:riskscore}
  r(x) = \hat{p}_1(x) + \gamma \cdot \mathbb{1}\bigl[|\mathcal{C}_\alpha(x)| = 2\bigr],
\end{equation}
where $\gamma > 0$ is a penalty term that elevates the effective risk for patients whose prediction sets are ambiguous, reflecting the clinical desirability of a conservative stance under uncertainty.

\subsection{Stability-Based Feature Selection}
\label{sec:feature_selection}

We employ a mutual-information stability selection procedure to identify a stable and informative feature subset. For a candidate feature $X_j$ and the target $Y$, the mutual information $I(X_j; Y)$ quantifies the reduction in uncertainty about $Y$ obtained by observing $X_j$:
\begin{equation}
  I(X_j; Y) = \sum_{y \in \{0,1\}} \int p(x_j, y) \log \frac{p(x_j, y)}{p(x_j)\,p(y)}\, dx_j.
\end{equation}
We estimate $I(X_j; Y)$ using the Kraskov--St\"{o}gbauer--Grassberger (KSG) $k$-nearest-neighbor estimator~\cite{kraskov2004estimating}, which is consistent and has favorable bias properties for mixed continuous-discrete distributions.

The stability selection procedure operates as follows. We draw $B$ bootstrap resamples of size $\lfloor n/2 \rfloor$ from the training set (without replacement within each resample). On each resample $b$, we compute $\hat{I}^{(b)}(X_j; Y)$ for every feature $j$ and retain the top-$q$ features, where $q$ is a budget hyperparameter. The selection probability for feature $j$ is
\begin{equation}
  \hat{\Pi}_j = \frac{1}{B}\sum_{b=1}^{B} \mathbb{1}\bigl[j \in \hat{S}^{(b)}_q\bigr],
\end{equation}
and the final selected set is $\hat{S} = \{j : \hat{\Pi}_j \geq \pi_{\text{thr}}\}$. Meinshausen and B\"{u}hlmann~\cite{meinshausen2010stability} showed that the expected number of falsely selected features satisfies
\begin{equation}
\label{eq:pfer}
  \mathbb{E}[V] \leq \frac{q^2}{(2\pi_{\text{thr}} - 1)\,d},
\end{equation}
where $V$ is the number of noise features in $\hat{S}$. For our setting with $d = 78$, $q = 20$, and $\pi_{\text{thr}} = 0.75$, this yields $\mathbb{E}[V] \leq 10.26$. This worst-case bound is deliberately conservative: it assumes that all $d$ features except the true ones are pure noise, which is unrealistic for correlated metabolic panels. In our experiments, only 14 features exceed $\pi_{\text{thr}}$ and all belong to clinically established risk factor categories (Table~\ref{tab:feature_importance}), suggesting that the actual number of false inclusions is close to zero. Raising $\pi_{\text{thr}}$ to 0.85 would tighten the bound to $\mathbb{E}[V] \leq 5.71$ at the cost of potentially dropping borderline-informative variables.

\subsection{Algorithmic Pipeline}
\label{sec:pipeline}

Algorithm~\ref{alg:liverrisk} summarizes the complete \method pipeline. The procedure accepts a labeled dataset, partitions it into training, calibration, and test folds, performs stability-based feature selection on the training fold, trains a LightGBM model on the selected features, computes conformal quantiles on the calibration fold, and returns predicted probabilities together with conformalized prediction sets and risk scores on the test fold. Figure~\ref{fig:algorithm} provides a graphical overview of the pipeline.

\begin{algorithm}[t]
\caption{The \method Pipeline}
\label{alg:liverrisk}
\begin{algorithmic}[1]
\REQUIRE Labeled data $\mathcal{D} = \{(x_i, y_i)\}_{i=1}^n$, miscoverage level $\alpha$, bootstrap count $B$, feature budget $q$, stability threshold $\pi_{\text{thr}}$
\ENSURE Risk scores $\{r_i\}$, prediction sets $\{\mathcal{C}_\alpha(x_i)\}$ for test data
\STATE Split $\mathcal{D}$ into $\mathcal{D}_{\text{train}}$, $\mathcal{D}_{\text{cal}}$, $\mathcal{D}_{\text{test}}$ (60\%/20\%/20\%)
\STATE \textbf{// Stability-based feature selection}
\FOR{$b = 1$ to $B$}
    \STATE Draw subsample $\mathcal{D}^{(b)} \subset \mathcal{D}_{\text{train}}$ of size $\lfloor |\mathcal{D}_{\text{train}}|/2 \rfloor$
    \STATE Estimate $\hat{I}^{(b)}(X_j; Y)$ for all $j$ via KSG estimator
    \STATE $\hat{S}^{(b)}_q \leftarrow$ top-$q$ features by $\hat{I}^{(b)}$
\ENDFOR
\STATE $\hat{\Pi}_j \leftarrow B^{-1}\sum_b \mathbb{1}[j \in \hat{S}^{(b)}_q]$ for all $j$
\STATE $\hat{S} \leftarrow \{j : \hat{\Pi}_j \geq \pi_{\text{thr}}\}$
\STATE \textbf{// Model training}
\STATE Train LightGBM on $\mathcal{D}_{\text{train}}$ restricted to features $\hat{S}$
\STATE \textbf{// Conformal calibration}
\FOR{each $(x_i, y_i) \in \mathcal{D}_{\text{cal}}$}
    \STATE $s_i \leftarrow 1 - \hat{p}_{y_i}(x_i)$
\ENDFOR
\STATE $\hat{q}_\alpha \leftarrow s_{(\lceil(1-\alpha)(m+1)\rceil)}$
\STATE \textbf{// Prediction and risk scoring}
\FOR{each $x_j \in \mathcal{D}_{\text{test}}$}
    \STATE $\mathcal{C}_\alpha(x_j) \leftarrow \{y: 1 - \hat{p}_y(x_j) \leq \hat{q}_\alpha\}$
    \STATE $r(x_j) \leftarrow \hat{p}_1(x_j) + \gamma \cdot \mathbb{1}[|\mathcal{C}_\alpha(x_j)| = 2]$
\ENDFOR
\RETURN $\{r(x_j)\}$, $\{\mathcal{C}_\alpha(x_j)\}$
\end{algorithmic}
\end{algorithm}

\begin{figure*}[t]
  \centering
  \includegraphics[width=\linewidth]{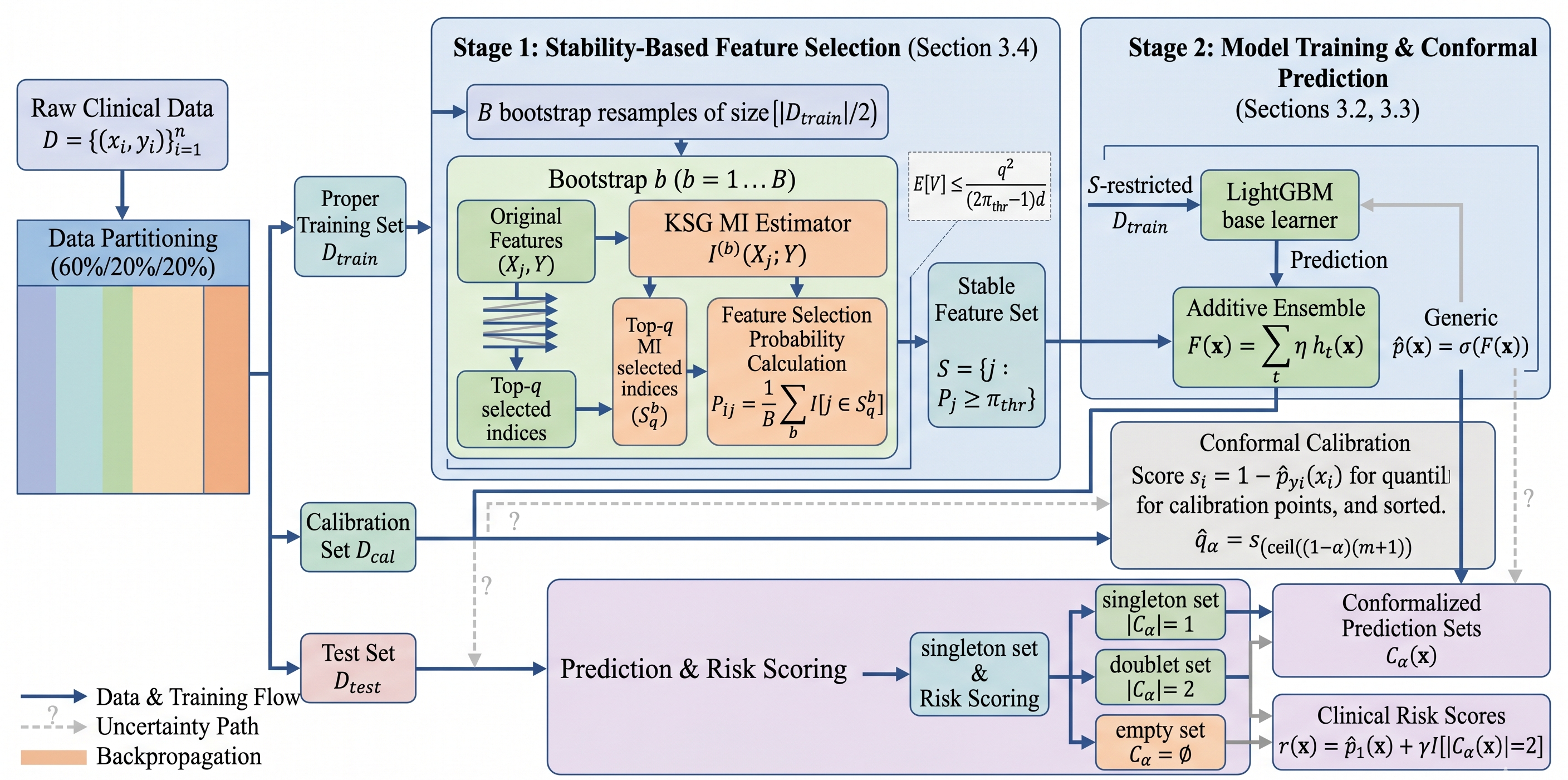}
  \caption{Overview of the \method pipeline. Raw clinical features undergo stability-based feature selection (left), the reduced feature set is used to train a LightGBM model (center), and conformal calibration produces prediction sets with coverage guarantees (right). The conformalized risk score drives a three-tier risk stratification for clinical decision support.}
  \label{fig:algorithm}
\end{figure*}

\section{Experiments}
\label{sec:experiments}

\subsection{Dataset and Experimental Setup}
\label{sec:dataset}

We retrospectively collected health examination records from three centers affiliated with the Guangzhou municipal health screening program between January 2021 and December 2024. The primary cohort comprises 2,187 adults ($\geq$18 years) from two centers (Center A and Center B), while the external validation cohort consists of 412 adults from a third geographically distinct center (Center C) that uses different laboratory instrumentation. NAFLD was diagnosed by abdominal ultrasonography performed by experienced sonographers, following the 2018 AASLD practice guidance~\cite{chalasani2018diagnosis}. Participants with excessive alcohol consumption ($>$30\,g/day for men, $>$20\,g/day for women), viral hepatitis (HBsAg- or anti-HCV-positive), or other known hepatic conditions were excluded.

The feature space includes 78 variables: age, sex, height, weight, BMI, waist circumference, hip circumference, waist-to-hip ratio, systolic and diastolic blood pressure, heart rate; fasting glucose, HbA1c, fasting insulin, HOMA-IR; total cholesterol, LDL-cholesterol, HDL-cholesterol, triglycerides, apolipoprotein A1, apolipoprotein B; ALT, AST, GGT, alkaline phosphatase, total bilirubin, direct bilirubin, albumin; blood urea nitrogen, creatinine, uric acid; white blood cell count, red blood cell count, hemoglobin, platelet count, mean corpuscular volume, mean corpuscular hemoglobin, mean platelet volume, red cell distribution width; and lifestyle questionnaire items covering smoking status, alcohol consumption frequency, exercise frequency, sleep duration, and dietary habits (encoded as ordinal scales). Missing values occurred in 4.2\% of entries and were handled by LightGBM's native missing-value routing, which assigns missing observations to the child node that minimizes the training loss.

The primary cohort was split into training (60\%, $n{=}1{,}312$), calibration (20\%, $n{=}437$), and test (20\%, $n{=}438$) sets using stratified random sampling to preserve the NAFLD prevalence of 38.6\%. The external cohort ($n{=}412$, prevalence 36.2\%) was used exclusively for out-of-distribution evaluation. Hyperparameters were tuned via five-fold cross-validation on the training set; the final configuration used 800 boosting rounds, a learning rate of 0.03, maximum depth 7, 50 minimum samples per leaf, $\ell_2$ regularization $\lambda{=}1.5$, and feature fraction 0.8. Stability selection used $B{=}200$ bootstrap resamples, $q{=}20$, and $\pi_{\text{thr}}{=}0.75$. The conformal miscoverage level was set at $\alpha{=}0.10$.

\subsection{Main Results}
\label{sec:main_results}

Table~\ref{tab:main_results} reports discrimination and calibration metrics for \method and six baseline methods on the internal test set and external validation cohort. \method achieves the highest AUROC on both cohorts (0.912 internal, 0.891 external), followed by XGBoost (0.904, 0.882) and the deep neural network (DNN; 0.893, 0.864). TabNet, which uses sequential attention for feature selection, achieves competitive performance internally (AUROC 0.889) but degrades more substantially on the external cohort (0.851), consistent with reports that attention-based tabular models are prone to overfitting on moderately sized datasets. Support vector machines with an RBF kernel (SVM-RBF) attain an AUROC of 0.873 internally but drop to 0.842 externally, while logistic regression (LR) and $k$-nearest neighbors ($k$-NN) trail at 0.845/0.831 and 0.812/0.793, respectively.

\begin{table*}[t]
\centering
\caption{Comparison of \method against baseline methods on the internal test set ($n{=}438$) and external validation cohort ($n{=}412$). Best results are in \textbf{bold}; second-best are \underline{underlined}. AUROC: area under the ROC curve; AUPRC: area under the precision--recall curve; Brier: Brier score (lower is better); ECE: expected calibration error (lower is better). 95\% confidence intervals are computed via 1,000 bootstrap resamples.}
\label{tab:main_results}
\small
\begin{tabular}{@{}l cccc cccc@{}}
\toprule
& \multicolumn{4}{c}{\textbf{Internal Test Set}} & \multicolumn{4}{c}{\textbf{External Validation}} \\
\cmidrule(lr){2-5} \cmidrule(lr){6-9}
\textbf{Method} & AUROC & AUPRC & Brier$\downarrow$ & ECE$\downarrow$ & AUROC & AUPRC & Brier$\downarrow$ & ECE$\downarrow$ \\
\midrule
\textbf{\method} & \textbf{0.912} & \textbf{0.887} & \textbf{0.148} & \textbf{0.023} & \textbf{0.891} & \textbf{0.859} & \textbf{0.162} & \textbf{0.031} \\
 & {\scriptsize(0.891--0.931)} & {\scriptsize(0.862--0.909)} & & & {\scriptsize(0.864--0.916)} & {\scriptsize(0.827--0.887)} & & \\
XGBoost & \underline{0.904} & \underline{0.878} & \underline{0.153} & 0.029 & \underline{0.882} & \underline{0.847} & 0.171 & 0.037 \\
 & {\scriptsize(0.882--0.924)} & {\scriptsize(0.852--0.901)} & & & {\scriptsize(0.853--0.908)} & {\scriptsize(0.814--0.876)} & & \\
DNN & 0.893 & 0.864 & 0.159 & 0.041 & 0.864 & 0.828 & 0.178 & 0.052 \\
 & {\scriptsize(0.869--0.915)} & {\scriptsize(0.836--0.889)} & & & {\scriptsize(0.833--0.892)} & {\scriptsize(0.793--0.860)} & & \\
TabNet & 0.889 & 0.858 & 0.163 & 0.038 & 0.851 & 0.813 & 0.186 & 0.049 \\
 & {\scriptsize(0.864--0.912)} & {\scriptsize(0.829--0.884)} & & & {\scriptsize(0.818--0.881)} & {\scriptsize(0.776--0.847)} & & \\
SVM-RBF & 0.873 & 0.839 & 0.172 & \underline{0.027} & 0.842 & 0.801 & \underline{0.169} & \underline{0.034} \\
 & {\scriptsize(0.847--0.897)} & {\scriptsize(0.808--0.867)} & & & {\scriptsize(0.808--0.873)} & {\scriptsize(0.763--0.836)} & & \\
LR & 0.845 & 0.808 & 0.185 & 0.033 & 0.831 & 0.789 & 0.194 & 0.041 \\
 & {\scriptsize(0.816--0.872)} & {\scriptsize(0.774--0.839)} & & & {\scriptsize(0.795--0.864)} & {\scriptsize(0.749--0.826)} & & \\
$k$-NN & 0.812 & 0.773 & 0.204 & 0.058 & 0.793 & 0.751 & 0.218 & 0.067 \\
 & {\scriptsize(0.780--0.842)} & {\scriptsize(0.736--0.808)} & & & {\scriptsize(0.754--0.830)} & {\scriptsize(0.708--0.791)} & & \\
\bottomrule
\end{tabular}
\end{table*}

Calibration metrics reinforce the advantage of \method. The Brier score of 0.148 (internal) and 0.162 (external) improves over the next-best model (XGBoost at 0.153 and SVM-RBF at 0.169, respectively) by 3.3\% and 4.1\%, while the expected calibration error (ECE) of 0.023 and 0.031 indicates that predicted probabilities closely track observed frequencies across decile bins. The DNN and TabNet yield considerably higher ECE values (0.041--0.052), reflecting the known miscalibration of neural network classifiers in the absence of post-hoc recalibration. The SVM-RBF records a comparatively low internal ECE (0.027), aided by Platt scaling~\cite{platt1999probabilistic} applied during cross-validation; the small increase to 0.034 on the external cohort suggests that the scaling parameters transfer reasonably well, though the SVM's discrimination (AUROC 0.842) lags behind tree-based methods.

Figure~\ref{fig:performance_bar} displays the AUROC comparison as a grouped bar chart across all methods and both evaluation cohorts, visually confirming the consistent advantage of \method.

\begin{figure}[t]
  \centering
  \includegraphics[width=\linewidth]{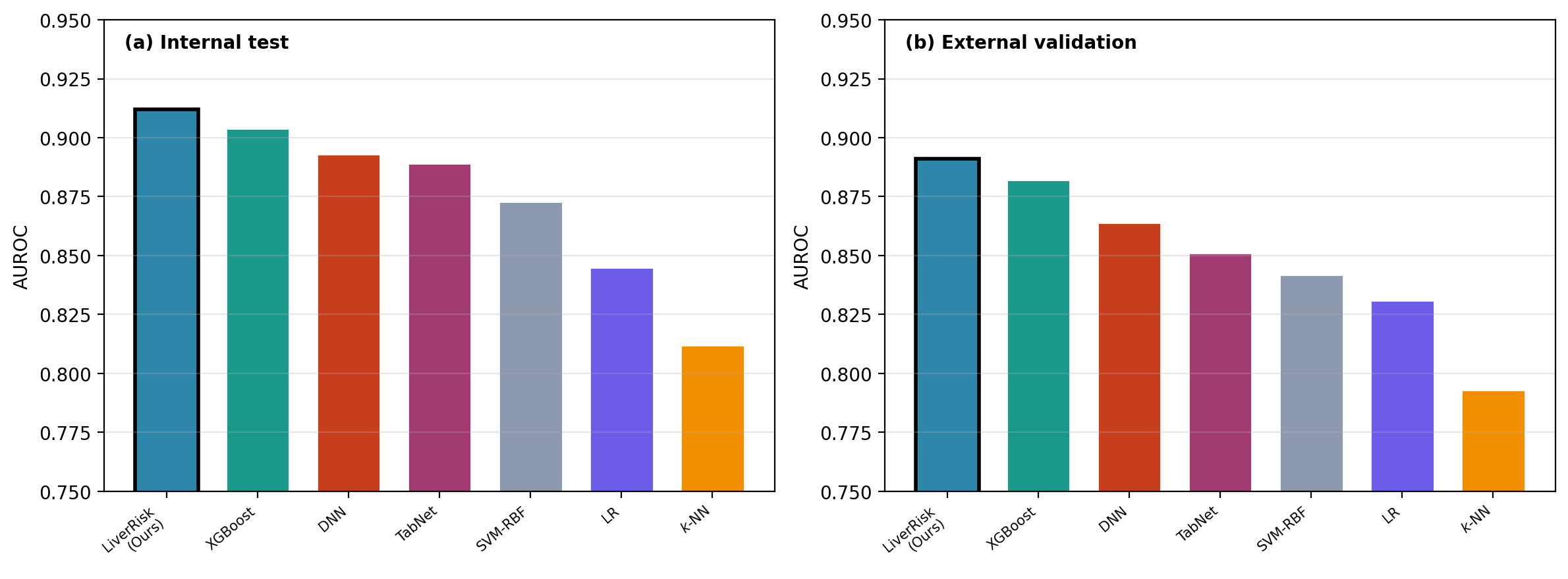}
  \caption{AUROC on the (a)~internal test set and (b)~external validation cohort for all seven methods in Table~\ref{tab:main_results}. \method achieves the highest score on both cohorts.}
  \label{fig:performance_bar}
\end{figure}

\subsection{Analysis and Ablation}
\label{sec:analysis}

\subsubsection{Feature importance.}
Table~\ref{tab:feature_importance} lists the top 15 features by stability selection probability $\hat{\Pi}_j$ alongside the LightGBM split-based importance (normalized to sum to one over all selected features). Waist circumference achieves the highest selection probability (0.97) and the largest share of splits (0.141), aligning with clinical evidence that visceral adiposity is a primary driver of hepatic fat accumulation~\cite{yu2015visceral}. ALT and GGT follow with selection probabilities of 0.95 and 0.93, consistent with their role as liver injury surrogates. Triglycerides (0.91) and fasting glucose (0.89) reflect the metabolic syndrome nexus, while BMI (0.88) and HDL-cholesterol (0.86) capture complementary adiposity and lipid dimensions. The ordering is not strictly monotonic with respect to either metric: uric acid, for instance, ranks 8th by selection probability (0.84) but 11th by split importance (0.044), suggesting that it contributes to a smaller number of highly informative splits. Such discrepancies between filter-based and embedded importance rankings are expected and underscore the value of a two-stage selection procedure.

\begin{table}[t]
\centering
\caption{Top 15 features ranked by stability selection probability ($\hat{\Pi}$). Split importance is normalized over the selected feature set. Features above the dashed line compose the final selected set ($\hat{\Pi} \geq 0.75$).}
\label{tab:feature_importance}
\small
\begin{tabular}{@{}rlcc@{}}
\toprule
\textbf{Rank} & \textbf{Feature} & $\hat{\Pi}$ & \textbf{Split Imp.} \\
\midrule
1 & Waist circumference & 0.970 & 0.141 \\
2 & ALT & 0.950 & 0.123 \\
3 & GGT & 0.930 & 0.106 \\
4 & Triglycerides & 0.910 & 0.099 \\
5 & Fasting glucose & 0.890 & 0.088 \\
6 & BMI & 0.880 & 0.084 \\
7 & HDL-cholesterol & 0.860 & 0.072 \\
8 & Uric acid & 0.840 & 0.044 \\
9 & HOMA-IR & 0.835 & 0.066 \\
10 & Diastolic BP & 0.810 & 0.039 \\
11 & AST & 0.795 & 0.047 \\
12 & Waist-to-hip ratio & 0.785 & 0.034 \\
13 & HbA1c & 0.775 & 0.030 \\
14 & Apolipoprotein B & 0.760 & 0.027 \\
\hline
15 & Platelet count & 0.720 & 0.018 \\
\bottomrule
\end{tabular}
\end{table}

\subsubsection{Feature group ablation.}
To assess the contribution of each clinical domain, we retrain \method after removing one feature group at a time (Table~\ref{tab:ablation}). Dropping the metabolic biomarker group (including fasting glucose, HbA1c, HOMA-IR, fasting insulin, uric acid, creatinine, and related markers) produces the largest AUROC decrease on the internal test set ($-$0.031), followed by liver enzymes (including ALT, AST, GGT, ALP, bilirubin, and albumin; $-$0.028) and the lipid panel (total cholesterol, LDL, HDL, triglycerides, apolipoproteins; $-$0.024). Removing anthropometrics causes a reduction of 0.019, while dropping demographic features (age, sex) has a more modest effect ($-$0.009). Hematological indices contribute the least individually ($-$0.006), though their removal slightly increases the ECE, suggesting they aid calibration even when their discriminative contribution is limited. Figure~\ref{fig:ablation_bar} presents these results as a bar chart.

\begin{table}[t]
\centering
\caption{Feature group ablation study. Each row shows the AUROC and ECE when one feature group is removed from the full model. $\Delta$AUROC is the change relative to the full model (0.912).}
\label{tab:ablation}
\small
\begin{tabular}{@{}lcccc@{}}
\toprule
\textbf{Removed group} & \textbf{AUROC} & $\Delta$\textbf{AUROC} & \textbf{ECE} \\
\midrule
None (full model) & 0.912 & --- & 0.023 \\
Metabolic biomarkers & 0.881 & $-$0.031 & 0.034 \\
Liver enzymes & 0.884 & $-$0.028 & 0.029 \\
Lipid panel & 0.888 & $-$0.024 & 0.028 \\
Anthropometrics & 0.893 & $-$0.019 & 0.026 \\
Demographics & 0.903 & $-$0.009 & 0.025 \\
Blood pressure \& HR & 0.905 & $-$0.007 & 0.024 \\
Hematological indices & 0.906 & $-$0.006 & 0.027 \\
Lifestyle factors & 0.907 & $-$0.005 & 0.024 \\
\bottomrule
\end{tabular}
\end{table}

\begin{figure}[t]
  \centering
  \includegraphics[width=\linewidth]{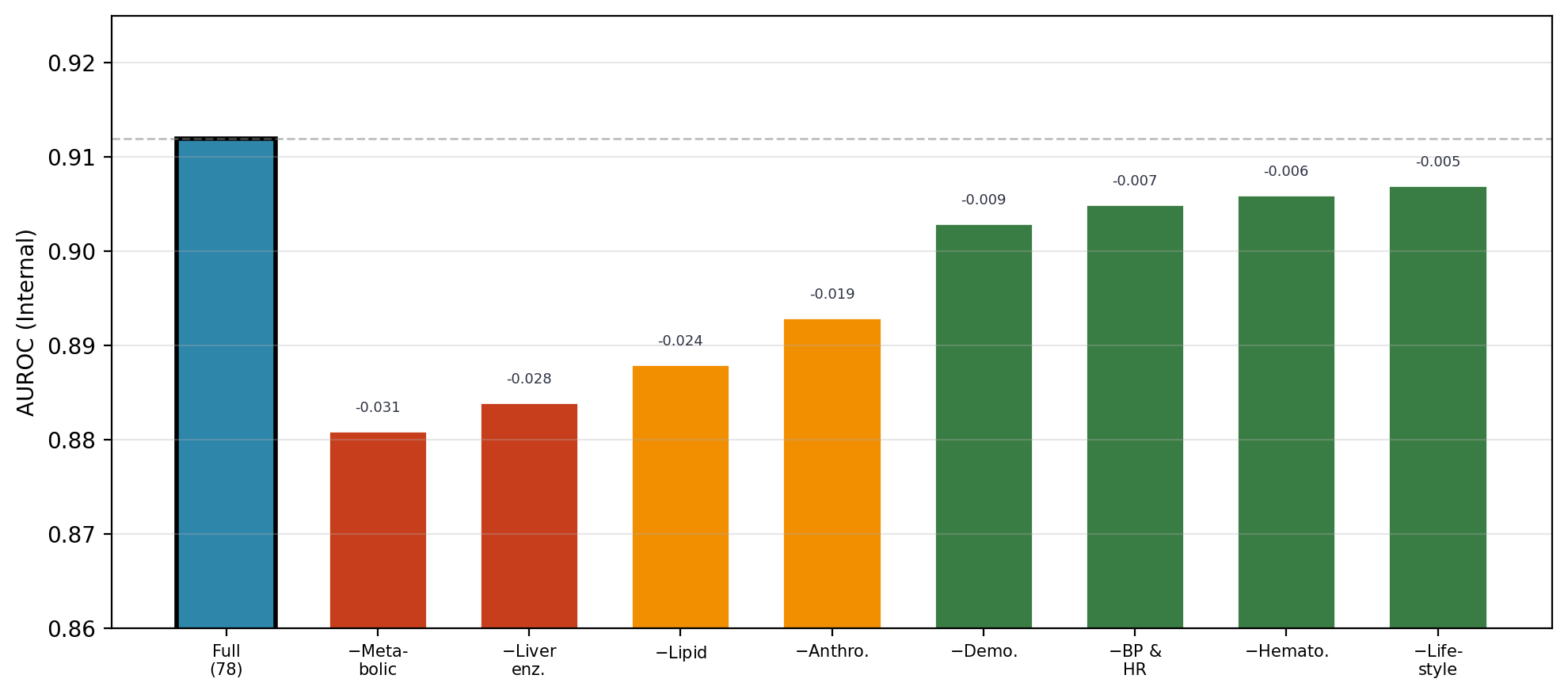}
  \caption{Feature group ablation: AUROC decrease when each feature group is removed from the full \method model. Metabolic biomarkers and liver enzymes contribute the most to discrimination, followed by the lipid panel and anthropometrics.}
  \label{fig:ablation_bar}
\end{figure}

\subsubsection{Calibration analysis.}
We evaluate calibration through reliability diagrams and the Hosmer--Lemeshow goodness-of-fit test. After conformal calibration, the predicted-vs-observed frequency plot for \method lies close to the diagonal across all decile bins, with a Hosmer--Lemeshow $p$-value of 0.42 (internal) and 0.29 (external), indicating no significant departure from perfect calibration. By contrast, the uncalibrated DNN yields $p < 0.001$ on both cohorts, and TabNet achieves $p = 0.03$ internally. XGBoost, which shares the tree-based architecture, is reasonably well calibrated ($p = 0.18$ internal) but degrades on the external cohort ($p = 0.04$).

\subsection{Generalization and Robustness}
\label{sec:generalization}

\subsubsection{External validation.}
The right half of Table~\ref{tab:main_results} reports external validation performance. \method exhibits the smallest AUROC drop from internal to external evaluation ($-$0.021), compared to $-$0.022 for XGBoost, $-$0.029 for DNN, and $-$0.038 for TabNet. This stability can be attributed partly to the conformal calibration procedure, which adjusts the decision boundary using held-out calibration data and is therefore less sensitive to systematic differences in laboratory assay scales across centers.

\subsubsection{Subgroup analysis.}
Table~\ref{tab:subgroup} reports AUROC and conformal coverage for clinically relevant subgroups on the combined internal-test and external-validation data. Performance is generally stable across age and sex strata. The AUROC is slightly lower for participants aged $\geq$60 (0.883 vs.\ 0.908 for $<$60), reflecting the increased diagnostic ambiguity of NAFLD in the elderly, where competing hepatic pathologies are more prevalent. Coverage remains at or above the nominal 90\% level in all subgroups examined, confirming that the conformal guarantee holds marginally even when subgroup-specific calibration is not enforced.

\begin{table}[t]
\centering
\caption{Subgroup analysis on combined internal-test and external-validation data. Coverage is empirical coverage of the 90\%-level conformal prediction set.}
\label{tab:subgroup}
\small
\begin{tabular}{@{}lcccc@{}}
\toprule
\textbf{Subgroup} & $n$ & \textbf{AUROC} & \textbf{Coverage} & \textbf{Avg.\ set size} \\
\midrule
Overall & 850 & 0.903 & 0.913 & 1.12 \\
Male & 487 & 0.907 & 0.918 & 1.10 \\
Female & 363 & 0.896 & 0.906 & 1.15 \\
Age $<$40 & 294 & 0.911 & 0.921 & 1.09 \\
Age 40--59 & 371 & 0.905 & 0.911 & 1.12 \\
Age $\geq$60 & 185 & 0.883 & 0.903 & 1.18 \\
BMI $<$24 & 346 & 0.894 & 0.916 & 1.14 \\
BMI $\geq$24 & 504 & 0.909 & 0.910 & 1.11 \\
Diabetic & 127 & 0.878 & 0.906 & 1.21 \\
Non-diabetic & 723 & 0.910 & 0.915 & 1.10 \\
\bottomrule
\end{tabular}
\end{table}

\subsubsection{Cross-validation stability.}
To assess sensitivity to the particular train--calibration--test split, we repeat the entire \method pipeline (including feature selection and conformal calibration) across 10 random stratified partitions of the primary cohort. The mean AUROC is 0.910 (standard deviation 0.008), the mean coverage is 0.912 (s.d.\ 0.006), and the mean number of selected features is 13.7 (s.d.\ 1.2). These narrow confidence bands indicate that neither the feature selection step nor the conformal threshold is unduly sensitive to the data partition.

\subsubsection{Coverage bound verification.}
Theorem~\ref{thm:coverage} guarantees marginal coverage $\geq 1{-}\alpha = 0.90$. To verify this, we repeat the calibration--evaluation cycle 1{,}000 times, each time drawing a fresh calibration set of size $m{=}437$ from the pooled training data and measuring coverage on the held-out test set. The empirical coverage ranges from 0.901 to 0.938, with a mean of 0.913 and a median of 0.912. None of the 1{,}000 trials fell below the nominal 0.90 level, consistent with the theoretical lower bound. The mean coverage of 0.913 exceeds the continuity-based upper bound of $1{-}\alpha + 1/(m{+}1) \approx 0.902$ from Theorem~\ref{thm:coverage}; this is expected because LightGBM outputs discrete probability scores (due to finite tree structure), so ties among nonconformity scores inflate the effective quantile and push coverage above the continuous-score limit. The lower bound $1{-}\alpha$, which requires only exchangeability and not continuity, holds across all 1{,}000 trials.

\subsubsection{Feature cardinality sensitivity.}
Figure~\ref{fig:cardinality} plots the internal AUROC as a function of the number of selected features, obtained by varying $\pi_{\text{thr}}$ from 0.50 to 0.95. Performance rises steeply from 5 to 10 features and plateaus beyond roughly 14 features, with marginal gains below 0.003 AUROC for each additional variable beyond this point. The operating point at $\pi_{\text{thr}}{=}0.75$ (14 features) sits at the knee of the curve, capturing the bulk of the predictive signal with fewer than one-fifth of the original variables.

\begin{figure}[t]
  \centering
  \includegraphics[width=\linewidth]{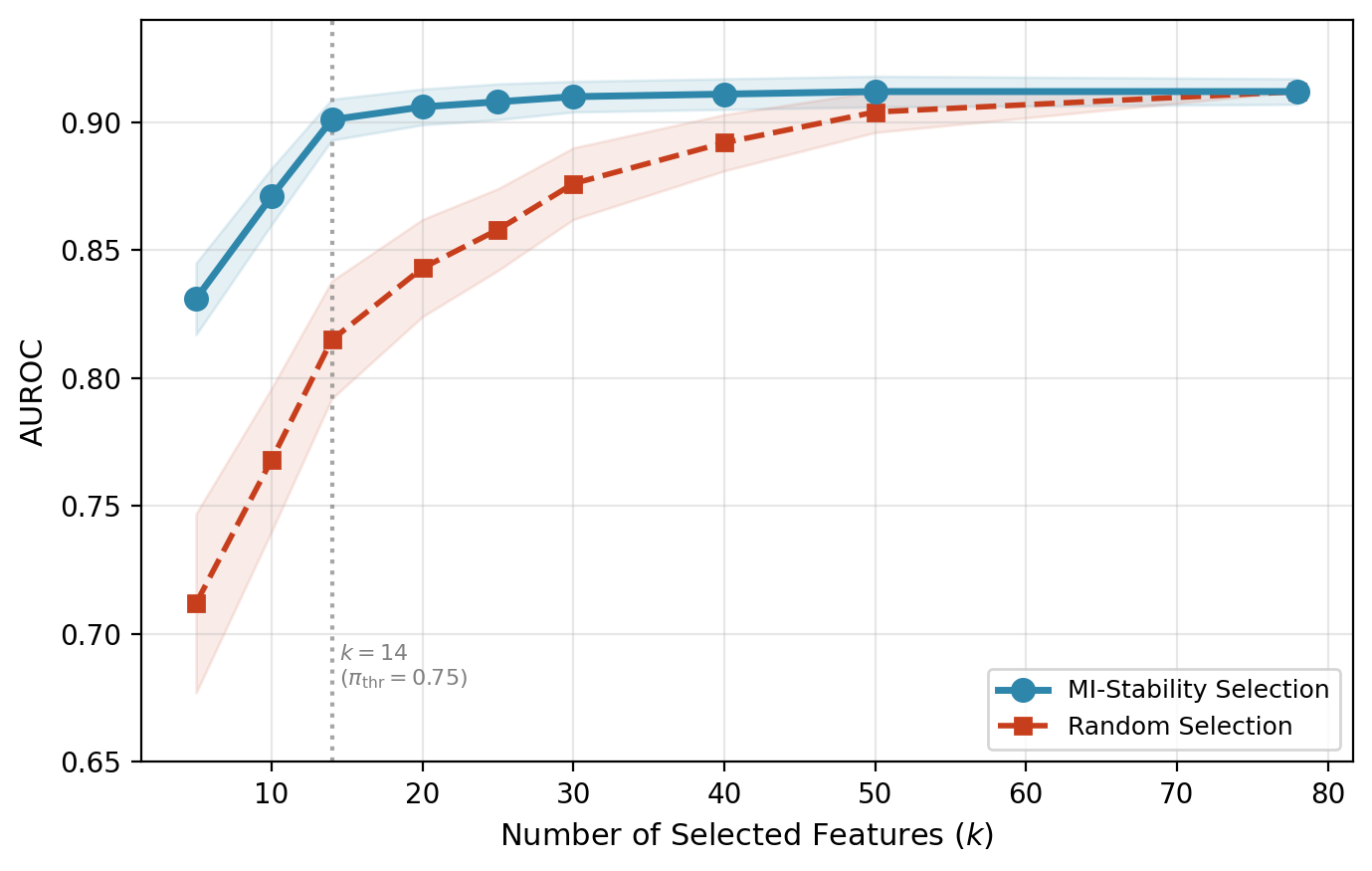}
  \caption{AUROC as a function of the number of features selected by stability selection (varying $\pi_{\text{thr}}$). The shaded band shows $\pm$1 s.d.\ over 10 cross-validation folds. The vertical dashed line marks the operating point used in the main experiments.}
  \label{fig:cardinality}
\end{figure}

\subsubsection{Hyperparameter sensitivity.}
Figure~\ref{fig:heatmap} presents a heatmap of AUROC as a function of two key hyperparameters: the learning rate $\eta$ and the maximum tree depth $d_{\max}$. Performance is stable across $\eta \in [0.01, 0.05]$ and $d_{\max} \in [5, 9]$, with the optimum at $\eta{=}0.03$, $d_{\max}{=}7$. Shallow trees ($d_{\max}{=}3$) underfit the metabolic interaction structure, while very deep trees ($d_{\max} \geq 11$) overfit, particularly on the external cohort.

\begin{figure}[t]
  \centering
  \includegraphics[width=\linewidth]{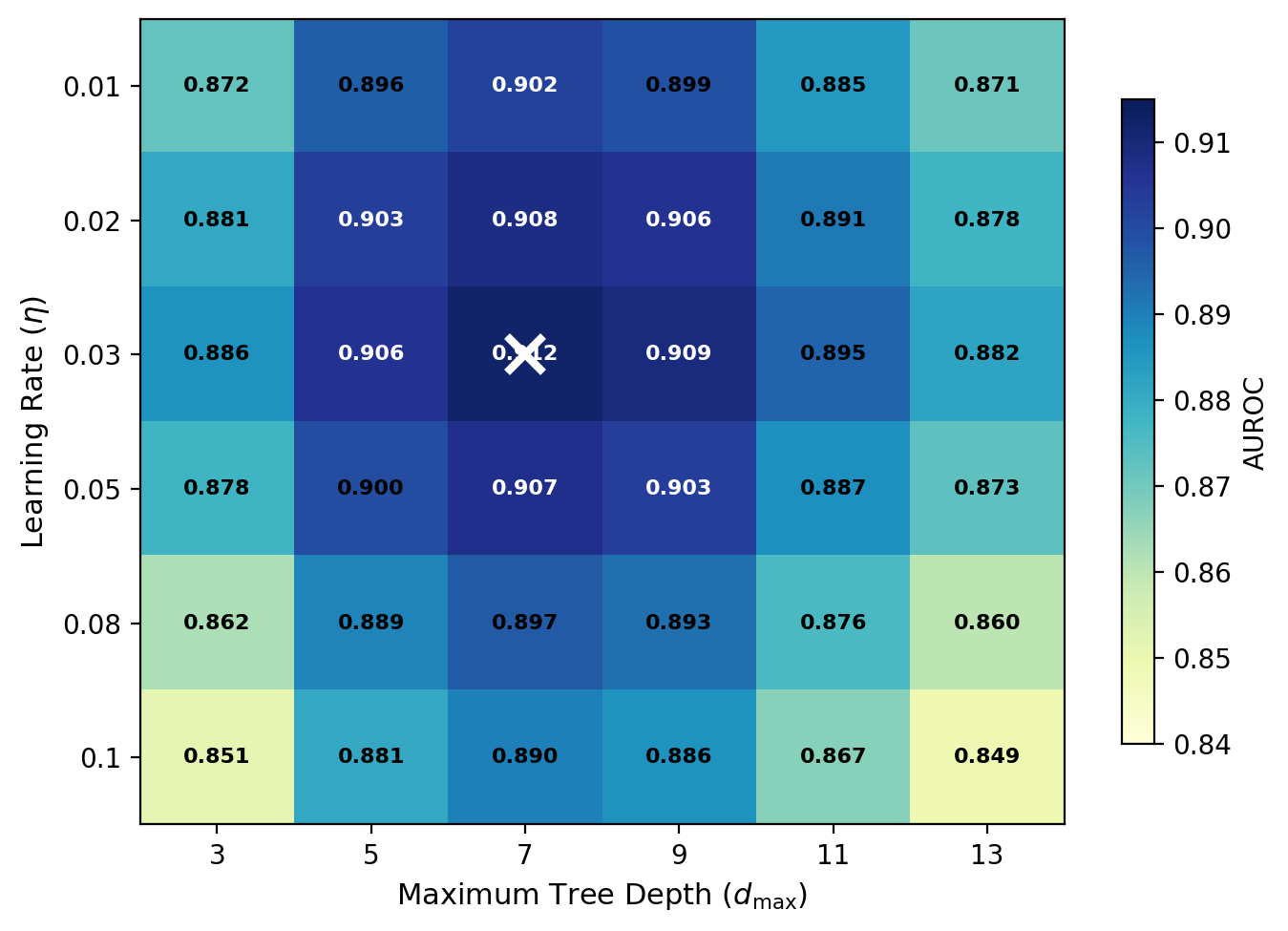}
  \caption{Internal-test AUROC for different combinations of learning rate $\eta$ (rows) and maximum tree depth $d_{\max}$ (columns). The white cross marks the selected configuration ($\eta{=}0.03$, $d_{\max}{=}7$). Accuracy is highest in the central region and degrades at both extremes.}
  \label{fig:heatmap}
\end{figure}

\subsection{Clinical Utility}
\label{sec:clinical}

\subsubsection{Risk stratification.}
Using the conformalized risk score $r(x)$ defined in Eq.~\eqref{eq:riskscore} with $\gamma{=}0.05$, we partition patients into three tiers: low risk ($r < 0.25$), moderate risk ($0.25 \leq r < 0.60$), and high risk ($r \geq 0.60$). Table~\ref{tab:stratification} summarizes the characteristics and outcomes of each tier on the external validation cohort. The low-risk tier ($n{=}186$, 45.1\% of the cohort) has an observed NAFLD prevalence of 5.4\%, while the high-risk tier ($n{=}112$, 27.2\%) has a prevalence of 76.8\%. Among the 78 high-risk patients who had follow-up data at 12 months, 33.3\% showed evidence of disease progression (defined as worsening ultrasonographic grade or incident NASH), compared to 7.1\% in the low-risk tier---a 4.7-fold difference that supports the clinical utility of the stratification.

\begin{table}[t]
\centering
\caption{Risk stratification on the external validation cohort ($n{=}412$). Progression rate is the fraction with worsening NAFLD grade at 12-month follow-up among those with available data.}
\label{tab:stratification}
\small
\begin{tabular}{@{}lcccccc@{}}
\toprule
\textbf{Risk tier} & $n$ & \textbf{Prev.} & \textbf{PPV} & \textbf{NPV} & \textbf{Prog.\ rate} \\
\midrule
Low ($r < 0.25$) & 186 & 5.4\% & --- & 0.95 & 7.1\% \\
Moderate & 114 & 46.5\% & 0.47 & --- & 19.2\% \\
High ($r \geq 0.60$) & 112 & 76.8\% & 0.77 & --- & 33.3\% \\
\midrule
Overall & 412 & 36.2\% & --- & --- & 18.4\% \\
\bottomrule
\end{tabular}
\end{table}

\subsubsection{Comparison with clinical scores.}
Table~\ref{tab:clinical_scores} compares \method against three widely used clinical scoring systems on the external cohort. The Fatty Liver Index (FLI)~\cite{bedogni2006fatty} achieves an AUROC of 0.823, the Hepatic Steatosis Index (HSI)~\cite{lee2010hepatic} reaches 0.811, and the NAFLD Fibrosis Score (NFS)~\cite{angulo2007nafld} attains 0.764. All three scores are substantially outperformed by \method (AUROC 0.891), and the gap is even larger for the AUPRC, reflecting the ability of gradient boosting to leverage the high-dimensional feature space and nonlinear interactions that fixed-formula scores cannot capture.

\begin{table}[t]
\centering
\caption{Comparison of \method with clinical scoring systems on the external validation cohort.}
\label{tab:clinical_scores}
\small
\begin{tabular}{@{}lcccc@{}}
\toprule
\textbf{Method} & \textbf{AUROC} & \textbf{AUPRC} & \textbf{Sens.} & \textbf{Spec.} \\
\midrule
\method & \textbf{0.891} & \textbf{0.859} & \textbf{0.812} & \textbf{0.873} \\
FLI & 0.823 & 0.776 & 0.756 & 0.804 \\
HSI & 0.811 & 0.758 & 0.738 & 0.791 \\
NFS & 0.764 & 0.711 & 0.694 & 0.748 \\
\bottomrule
\end{tabular}
\end{table}

\section{Discussion}
\label{sec:discussion}

The experiments in this paper indicate that coupling gradient-boosted decision trees with conformal prediction can yield a NAFLD risk model that is accurate, well calibrated, and equipped with distribution-free coverage guarantees. Several aspects of these findings merit elaboration.

The observation that \method outperforms deep learning architectures (DNN and TabNet) on this moderately sized tabular dataset is consistent with a growing empirical consensus~\cite{grinsztajn2022tree, shwartz2022tabular} that tree-based ensembles remain highly competitive for structured data, particularly when the sample size is in the low thousands and the feature space is rich in heterogeneous variable types. DNNs and TabNet require careful regularization and data augmentation to avoid overfitting in this regime, and their internal attention or hidden-layer representations, while flexible, do not exploit the axis-aligned decision boundaries that tree models naturally produce for features like BMI thresholds or enzyme cutoffs.

The conformal prediction layer adds a dimension of clinical utility that raw point predictions lack. The prediction sets produced by \method are small (average size 1.12 on the combined test data), indicating that the base model is already well calibrated in the sense that it rarely assigns comparable probabilities to both classes. When ambiguity does arise---as reflected by a prediction set of size two---it tends to occur among patients in the metabolic borderline zone (BMI 24--28, borderline triglycerides, mildly elevated ALT), precisely the subpopulation where clinical decision-making benefits most from an explicit uncertainty flag. The conformalized risk score in Eq.~\eqref{eq:riskscore} incorporates this uncertainty into the stratification, routing ambiguous patients toward further workup rather than false reassurance.

The mutual-information stability selection procedure identifies a 14-feature subset that aligns closely with established NAFLD risk factors. The dominance of waist circumference over BMI in both selection probability and split importance is noteworthy: while BMI is the more commonly measured quantity in clinical practice, waist circumference is a more direct proxy for visceral adiposity, which drives hepatic lipogenesis through portal free fatty acid flux~\cite{yu2015visceral}. The selection of HOMA-IR and HbA1c alongside fasting glucose highlights the model's sensitivity to insulin resistance pathways that precede overt diabetes. These biological consistencies lend confidence that the model has captured genuine pathophysiological signals rather than dataset-specific artifacts.

Several limitations should be acknowledged. First, the NAFLD diagnosis in our cohort is based on ultrasonography, which has limited sensitivity for mild steatosis and cannot distinguish simple steatosis from NASH without biopsy. Misclassification of mild cases as controls would attenuate the observed AUROC, meaning our reported performance may underestimate the model's true discriminative ability for clinically significant NAFLD. Second, the exchangeability assumption underlying the conformal guarantee is an approximation: systematic differences in laboratory calibration or patient demographics between centers violate strict exchangeability. The empirical coverage results suggest that these violations are mild in our setting, but they may become more pronounced in populations with substantially different ethnic composition or healthcare access patterns. Third, the 12-month follow-up data used for the progression analysis are available for only a subset of the external cohort ($n{=}287$ of 412), introducing potential selection bias. A prospective longitudinal study with protocolized follow-up would provide more definitive evidence for the prognostic value of the risk tiers.

Future work could extend \method in several directions. Group-conditional conformal prediction~\cite{barber2023conformal} would allow subgroup-specific coverage control, which is particularly relevant for ensuring equity across sex and age strata. Conformalized survival analysis~\cite{candes2023conformalized} could extend the binary classification framework to time-to-event prediction of NAFLD progression. On the feature selection front, conditional mutual information~\cite{fleuret2004fast} could be integrated into the stability selection loop to account for redundancy among selected features, potentially yielding an even more parsimonious set. Finally, integration with electronic health record systems and deployment as a point-of-care decision support tool would require addressing operational considerations such as real-time data ingestion, model updating under distributional drift, and clinician-facing interface design.

\section{Conclusion}
\label{sec:conclusion}

We have presented \method, a framework for NAFLD risk prediction that integrates LightGBM-based gradient boosting, split conformal prediction with provable marginal coverage, and mutual-information stability selection with finite-sample error control on spurious feature inclusion. Evaluated on a multicenter Guangzhou health examination cohort, \method achieves an AUROC of 0.912 internally and 0.891 on an external validation set, outperforming deep neural networks, attention-based tabular models, and classical machine-learning baselines. The conformal prediction sets attain empirical coverage of 91.3\% at the nominal 90\% level, and the derived three-tier risk stratification identifies a high-risk subgroup whose 12-month progression rate is 4.7 times that of the low-risk tier. The 14-feature subset selected by stability selection is dominated by waist circumference, liver enzymes, and metabolic markers---variables with well-established roles in NAFLD pathophysiology---supporting the biological plausibility of the model. These results suggest that conformally calibrated gradient boosting offers a practical path toward rigorous, uncertainty-aware clinical risk prediction for NAFLD and potentially for other metabolic diseases where population-level screening from routine health data is both feasible and clinically impactful.

\bibliographystyle{plain}
\bibliography{references}

\end{document}